\documentclass[10pt,twocolumn]{article}

\usepackage[utf8]{inputenc}
\usepackage[T1]{fontenc}
\usepackage{times}
\usepackage[margin=0.75in]{geometry}
\usepackage{graphicx}
\usepackage{amsmath,amssymb,amsthm}
\usepackage{algorithm}
\usepackage{algorithmic}
\usepackage{xcolor}
\usepackage{listings}
\usepackage{hyperref}
\usepackage{booktabs}
\usepackage{enumitem}
\usepackage{fancyhdr}
\usepackage{titlesec}
\usepackage{caption}
\usepackage{balance}
\usepackage{tikz}
\usetikzlibrary{shapes,arrows,positioning,shapes.geometric}

\newtheorem{definition}{Definition}
\newtheorem{theorem}{Theorem}
\newtheorem{proposition}{Proposition}

\definecolor{prescottblue}{RGB}{0,76,153}
\definecolor{codegray}{rgb}{0.5,0.5,0.5}
\definecolor{backcolour}{rgb}{0.97,0.97,0.95}

\hypersetup{
    colorlinks=true,
    linkcolor=prescottblue,
    filecolor=prescottblue,      
    urlcolor=prescottblue,
    citecolor=prescottblue,
    pdftitle={Odin: Multi-Signal Graph Intelligence for Autonomous Discovery in Knowledge Graphs},
    pdfauthor={Muyukani Kizito, Elizabeth Nyambere},
    pdfsubject={Knowledge Graphs, Graph Intelligence, Autonomous Discovery},
    pdfkeywords={knowledge graphs, graph neural networks, autonomous discovery, graph intelligence, NPLL, beam search}
}

\lstdefinestyle{compactcode}{
    backgroundcolor=\color{backcolour},   
    commentstyle=\color{green!60!black},
    keywordstyle=\color{prescottblue}\bfseries,
    stringstyle=\color{orange!80!black},
    basicstyle=\ttfamily\scriptsize,
    breaklines=true,                 
    showspaces=false,                
    showstringspaces=false,
    showtabs=false,                  
    tabsize=2,
    frame=none,
    xleftmargin=3pt,
    xrightmargin=3pt,
    aboveskip=2pt,
    belowskip=2pt
}
\setlist{noitemsep,topsep=2pt,parsep=1pt,partopsep=0pt,leftmargin=*}

\titleformat{\section}
  {\normalfont\large\bfseries\color{prescottblue}}{\thesection}{0.5em}{}
\titlespacing*{\section}{0pt}{6pt plus 2pt minus 1pt}{3pt plus 1pt minus 1pt}

\titleformat{\subsection}
  {\normalfont\normalsize\bfseries\color{prescottblue}}{\thesubsection}{0.5em}{}
\titlespacing*{\subsection}{0pt}{5pt plus 2pt minus 1pt}{2pt plus 1pt minus 1pt}

\titleformat{\subsubsection}
  {\normalfont\normalsize\itshape}{\thesubsubsection}{0.5em}{}
\titlespacing*{\subsubsection}{0pt}{4pt plus 1pt minus 1pt}{2pt plus 1pt minus 1pt}

\DeclareMathOperator*{\argmax}{arg\,max}

\begin{document}

\twocolumn[
\begin{@twocolumnfalse}
\begin{center}
    {\LARGE\bfseries Odin: Multi-Signal Graph Intelligence for\\ Autonomous Discovery in Knowledge Graphs\par}
    \vspace{0.3cm}
    
    {\large Muyukani Kizito\,$^{1,*}$ \quad Elizabeth Nyambere\,$^{1}$ \par}
    \vspace{0.15cm}
    
    {\normalsize $^{1}$Prescott Data\par}
    \vspace{0.1cm}
    {\normalsize \texttt{\{muyukani,nyambere\}@prescottdata.io}\par}
    {\normalsize $^{*}$Corresponding author: \texttt{muyukani@prescottdata.io}\par}
    \vspace{0.2cm}
\end{center}

\begin{abstract}
We present Odin, the first production-deployed graph intelligence engine for autonomous discovery of meaningful patterns in knowledge graphs without prior specification. Unlike retrieval-based systems that answer predefined queries, Odin guides exploration through the COMPASS (Composite Oriented Multi-signal Path Assessment) score, a novel metric that combines (1) structural importance via Personalized PageRank, (2) semantic plausibility through Neural Probabilistic Logic Learning (NPLL) used as a discriminative filter rather than generative model, (3) temporal relevance with configurable decay, and (4) community-aware guidance through GNN-identified bridge entities and inter-community affinity scores. This multi-signal integration, particularly the bridge scoring mechanism, addresses the "echo chamber" problem where graph exploration becomes trapped in dense local communities. We formalize the autonomous discovery problem, prove theoretical properties of our scoring function, and demonstrate that beam search with multi-signal guidance achieves $O(b \cdot h)$ complexity while maintaining high recall compared to exhaustive exploration. To our knowledge, Odin represents the first autonomous discovery system deployed in regulated production environments (healthcare and insurance), demonstrating significant improvements in pattern discovery quality and analyst efficiency. Our approach maintains complete provenance traceability---a critical requirement for regulated industries where hallucination is unacceptable.
\end{abstract}

\textbf{Keywords:} Knowledge Graphs, Graph Intelligence, Autonomous Discovery, Neural Probabilistic Logic, Beam Search, Graph Exploration

\vspace{0.3cm}
\end{@twocolumnfalse}
]

\section{Introduction}

Knowledge graphs (KGs) have emerged as a powerful paradigm for representing structured, interconnected organizational data~\cite{paulheim2017knowledge}. Unlike document stores or relational databases, KGs explicitly model entities, typed relationships, and multi-modal evidence, enabling sophisticated reasoning. However, extracting actionable insights from large-scale KGs remains challenging. Traditional approaches rely on query languages (SPARQL, Cypher, AQL) that require analysts to specify exact patterns~\cite{angles2017foundations}: a fundamental limitation when the goal is \textit{discovery} rather than \textit{retrieval}.

Consider a healthcare KG containing millions of patient records, diagnoses, treatments, and outcomes. An analyst might query for ``patients with sepsis treated with antibiotics,'' but this only finds patterns they already hypothesize. What about unknown correlations? Emerging trends? Cross-domain patterns spanning facility transfers, provider networks, and readmission rates? Such discoveries require \textit{autonomous exploration}: the ability to identify meaningful patterns without prior specification.

\textbf{The Core Challenge.} Autonomous exploration faces three critical trade-offs: (1) \textit{Coverage vs. Efficiency}: Exhaustive multi-hop traversal has complexity $O(d^h)$ where $d$ is average degree and $h$ is hop depth, quickly becoming intractable; (2) \textit{Signal vs. Noise}: Not all graph edges are equally informative; data extraction errors, temporal invalidity, and spurious correlations create semantically implausible paths; (3) \textit{Explainability vs. Performance}: Black-box methods (e.g., end-to-end GNNs) struggle in regulated domains requiring audit trails.

\textbf{Why Now?} The convergence of three trends makes autonomous discovery critical: (1) Organizations have invested billions in knowledge graphs but struggle to extract value beyond pre-specified queries; (2) AI agents are rapidly emerging as the primary interface for complex data exploration; (3) Existing graph intelligence solutions (Microsoft GraphRAG, Neo4j GDS, LlamaIndex) remain fundamentally query-driven, requiring analysts to know what to ask. This gap (between the need for open-ended discovery and the limitations of retrieval-based systems) has not been addressed in production deployments.

\textbf{Our Approach.} We introduce Odin, the first graph intelligence engine designed as a \textit{compass} for AI agents rather than a retrieval system. Odin does not answer questions; it scores possible exploration directions using a principled multi-signal framework. This architectural separation (graph intelligence from natural language reasoning) provides modularity, explainability, and adaptability across different agent objectives.

\textbf{Key Contributions.}
\begin{itemize}
\item Formalization of the autonomous discovery problem as beam search over scored graph paths with provable complexity bounds
\item Introduction of the COMPASS score: a novel multi-signal metric that unifies, for the first time, structural importance (PPR), semantic plausibility (NPLL as discriminative filter), temporal relevance, and GNN-based community-aware guidance in a single principled framework
\item Novel bridge scoring mechanism that addresses the "echo chamber" problem by augmenting local PageRank scores with offline-computed GNN community structure, mathematically compelling cross-community exploration
\item Self-managing NPLL architecture that auto-trains from graph data and stores only rule weights, eliminating operational overhead
\item \textbf{First production deployment} of autonomous discovery in regulated industries (healthcare, insurance), demonstrating order-of-magnitude efficiency gains with maintained discovery quality and complete audit traceability
\item Open design enabling integration with any agent framework (ReAct, LangGraph, AutoGen), establishing a reusable pattern for agent-graph collaboration
\end{itemize}

\section{Problem Formulation}
\label{sec:problem}

\subsection{Knowledge Graph Representation}

We define a knowledge graph as $\mathcal{G} = (\mathcal{E}, \mathcal{R}, \mathcal{T})$ where $\mathcal{E}$ is a set of entities, $\mathcal{R}$ is a set of typed relationships, and $\mathcal{T} \subseteq \mathcal{E} \times \mathcal{R} \times \mathcal{E}$ is a set of triples $(e_s, r, e_o)$ representing directed edges. Each triple has associated metadata including temporal annotations $t \in \mathbb{R}^+$ and provenance links to source documents $D$.

\begin{definition}[Path]
A path $p$ of length $h$ in $\mathcal{G}$ is a sequence of connected triples: $p = [(e_0, r_1, e_1), (e_1, r_2, e_2), \ldots, (e_{h-1}, r_h, e_h)]$ where consecutive triples share an entity.
\end{definition}

\subsection{Autonomous Discovery Problem}

Traditional KG reasoning tasks include link prediction~\cite{bordes2013translating}, entity resolution~\cite{shen2015entity}, and query answering~\cite{hamilton2018embedding}. These assume knowledge of target patterns. In contrast:

\begin{definition}[Autonomous Discovery]
Given a KG $\mathcal{G}$ and seed entities $\mathcal{S} \subset \mathcal{E}$, the autonomous discovery task is to identify a set of paths $\mathcal{P}^*$ starting from $\mathcal{S}$ that maximize a discovery utility function $U: 2^{\mathcal{P}} \rightarrow \mathbb{R}$ measuring novelty, significance, and evidence quality, subject to computational budget $B$.
\end{definition}

The key distinction: we do not specify \textit{what} to find, only \textit{where to start} and \textit{how to evaluate} discovered patterns. This requires a scoring mechanism that prioritizes promising paths during exploration.

\subsection{Requirements Analysis}

From production deployments in healthcare and insurance domains, we identify critical requirements:

\textbf{R1: Multi-hop Reasoning.} Insights often span $h \geq 3$ relationships. E.g., ``Patient $\xrightarrow{\text{admitted}}$ Facility $\xrightarrow{\text{referred\_by}}$ Provider $\xrightarrow{\text{high\_rate}}$ Readmission'' requires 3-hop traversal.

\textbf{R2: Semantic Plausibility.} Structural connectivity alone is insufficient. The path ``Patient $\xrightarrow{\text{diagnosed}}$ Fracture $\xrightarrow{\text{treated}}$ Antibiotics'' may exist in the graph but is medically implausible.

\textbf{R3: Evidence Grounding.} Unlike generative models, every path must trace to source documents. Hallucination is unacceptable in regulated industries.

\textbf{R4: Scalability.} Enterprise KGs contain $|\mathcal{E}| \sim 10^6$--$10^7$ entities. Exploration must complete in seconds for interactive use.

\textbf{R5: Explainability.} Scores must decompose into interpretable components for auditing and debugging.

\section{The Odin Framework}

\subsection{Architecture Overview}

Odin operates as a two-phase system: (1) \textbf{offline extraction pipeline} that constructs the KG and computes structural metadata, and (2) \textbf{online intelligence library} that performs real-time exploration. This separation is critical for understanding system boundaries and deployment requirements.

\textbf{Phase 1: Offline Extraction.} A separate pipeline (not the focus of this paper) processes source documents to extract entities, relationships, and timestamps, populating a graph database (we use ArangoDB). Concurrently, a GNN-based community detection module analyzes graph topology to identify communities, bridge entities, and inter-community affinities. These are stored as metadata tables: \texttt{BridgeEntities(entity\_id, community\_ids, strength)} and \texttt{CommunityAffinity(comm\_i, comm\_j, score)}. This pipeline runs periodically (e.g., nightly) as new data arrives.

\textbf{Phase 2: Online Intelligence.} The Odin library (the focus of this paper) reads the populated KG and optional community metadata. At query time, it trains NPLL models on-demand, computes PPR from seed entities, and performs beam search with COMPASS scoring. Community metadata, if available, enhances scoring via bridge/affinity terms; if absent, Odin falls back to PPR+NPLL+temporal signals. Query latency is $<$500ms for interactive agent use.

This architecture enables: (1) \textbf{modularity}—extraction pipeline and Odin library evolve independently, (2) \textbf{flexibility}—Odin works with any KG adhering to a simple schema (entity-relation-entity triples + optional metadata), and (3) \textbf{scalability}—expensive GNN computations happen offline, keeping online queries fast.

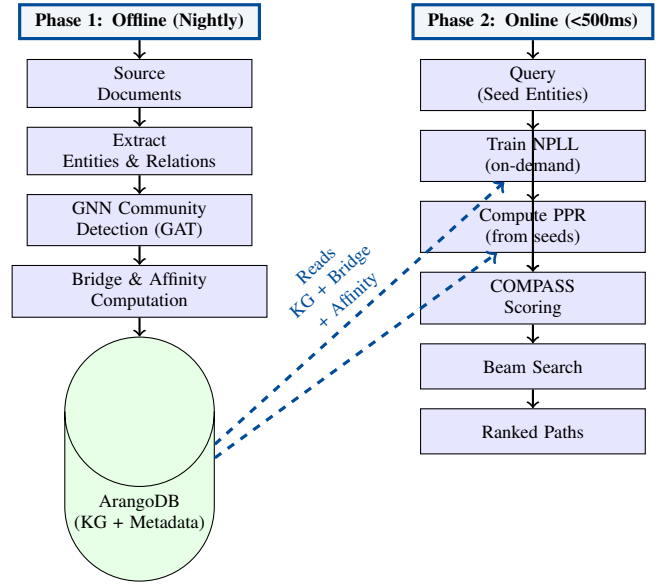
\begin{figure*}[t]
\centering
\footnotesize
\begin{tikzpicture}[
    node distance=0.25cm,
    box/.style={rectangle, draw, text width=2.8cm, align=center, minimum height=0.6cm, fill=blue!10, font=\scriptsize},
    phase/.style={rectangle, draw=prescottblue, very thick, text width=3cm, align=center, minimum height=0.45cm, fill=prescottblue!5, font=\scriptsize\bfseries},
    storage/.style={cylinder, draw, shape border rotate=90, text width=1.8cm, align=center, minimum height=0.55cm, fill=green!10, font=\scriptsize},
    arrow/.style={->, thick}
]

\node[phase] (phase1) {\textbf{Phase 1: Offline (Nightly)}};
\node[box, below=of phase1] (docs) {Source\\Documents};
\node[box, below=of docs] (extract) {Extract\\Entities \& Relations};
\node[box, below=of extract] (gnn) {GNN Community\\Detection (GAT)};
\node[box, below=of gnn, text width=3.2cm] (meta) {Bridge \& Affinity\\Computation};
\node[storage, below=of meta] (db) {ArangoDB\\(KG + Metadata)};

\draw[arrow] (phase1) -- (docs);
\draw[arrow] (docs) -- (extract);
\draw[arrow] (extract) -- (gnn);
\draw[arrow] (gnn) -- (meta);
\draw[arrow] (meta) -- (db);

\node[phase, right=2cm of phase1] (phase2) {\textbf{Phase 2: Online (<500ms)}};
\node[box, below=of phase2] (query) {Query\\(Seed Entities)};
\node[box, below=of query] (npll) {Train NPLL\\(on-demand)};
\node[box, below=of npll] (ppr) {Compute PPR\\(from seeds)};
\node[box, below=of ppr] (compass) {COMPASS\\Scoring};
\node[box, below=of compass] (beam) {Beam Search};
\node[box, below=of beam] (result) {Ranked Paths};

\draw[arrow] (phase2) -- (query);
\draw[arrow] (query) -- (npll);
\draw[arrow] (query) -- (ppr);
\draw[arrow] (npll) -- (compass);
\draw[arrow] (ppr) -- (compass);
\draw[arrow] (compass) -- (beam);
\draw[arrow] (beam) -- (result);

\draw[arrow, dashed, prescottblue, very thick] (db) -- node[above, sloped, text width=2cm, align=center] {\scriptsize Reads\\KG + Bridge\\+ Affinity} (npll);
\draw[arrow, dashed, prescottblue, very thick] (db) -- (ppr);

\end{tikzpicture}
\caption{Odin Two-Phase Architecture. Phase 1 (offline) constructs the KG and computes GNN-based community metadata. Phase 2 (online) performs real-time exploration using COMPASS scoring with bridge/affinity signals from Phase 1.}
\label{fig:architecture}
\end{figure*}

\subsection{Query-Time Exploration Pattern}

Odin implements a \textit{score-based exploration} paradigm. Given current position $e_c \in \mathcal{E}$, Odin computes scores for all outgoing edges $(e_c, r, e_n)$ and returns top-$k$ candidates. This supports the OODA loop (Observe-Orient-Decide-Act)~\cite{boyd1996essence} commonly used in agent design:

\begin{algorithmic}[1]
\STATE \textbf{Observe:} Agent at entity $e_c$ with accumulated context $C$
\STATE \textbf{Orient:} Query Odin for scored neighbors
\STATE \textbf{Decide:} Agent selects next entity $e_{c+1}$ using LLM reasoning
\STATE \textbf{Act:} Move to $e_{c+1}$, update $C$, repeat
\end{algorithmic}

\subsection{The COMPASS Scoring Function}

We introduce the COMPASS (Composite Oriented Multi-signal Path Assessment) score—a novel compositional function that, for the first time, unifies structural, semantic, temporal, and community-aware signals in a single exploration metric. For a path $p$, we define:

\begin{equation}
\begin{split}
\text{COMPASS}(p) = &S_{\text{edge}}(p) \cdot S_{\text{struct}}(p) \cdot S_{\text{bridge}}(p) \\
&\cdot S_{\text{affinity}}(p) \cdot S_{\text{prior}}(p) \cdot S_{\text{temp}}(p)
\end{split}
\end{equation}

where each component is a normalized score $\in [0,1]$, and the multiplicative composition ensures all signals must agree for high overall score.

\textbf{Why Multiplicative?} We choose multiplicative over additive combination (e.g., $\alpha \cdot S_{\text{struct}} + \beta \cdot S_{\text{edge}} + \ldots$) for three reasons: (1) \textbf{Veto property}: semantically implausible paths (low $S_{\text{edge}}$) are rejected even if structurally important, critical for safety in regulated domains. (2) \textbf{Probabilistic interpretation}: treating signals as independent confidences, multiplication aligns with joint probability. (3) \textbf{Parameter-free}: avoids learning weights $\alpha, \beta, \gamma, \delta$ (overfitting risk), promoting generalization across domains.

The components are:

\textbf{$S_{\text{edge}}(p)$: Edge Confidence} aggregating NPLL semantic plausibility across all edges:

\begin{equation}
S_{\text{edge}}(p) = \prod_{i=1}^{h} \mathcal{M}(e_{i-1}, r_i, e_i)
\end{equation}

where $\mathcal{M}$ is the trained NPLL model (described below). This product ensures semantically implausible edges veto the entire path.

\textbf{$S_{\text{struct}}(p)$: Structural Importance} via Personalized PageRank (PPR)~\cite{haveliwala2002topic}. Given seed set $\mathcal{S}$, PPR computes stationary distribution $\pi$ of random walk with teleport probability $\alpha_{\text{ppr}}$ back to $\mathcal{S}$:

\begin{equation}
\pi = (1 - \alpha_{\text{ppr}}) \mathbf{A}^T \pi + \alpha_{\text{ppr}} \mathbf{s}
\end{equation}

where $\mathbf{A}$ is the column-normalized adjacency matrix and $\mathbf{s}$ is the seed distribution. For path $p$, we aggregate: $S_{\text{struct}}(p) = \frac{1}{h} \sum_{i=1}^{h} \pi(e_i)$.

We implement approximate PPR using local push~\cite{andersen2006local} with complexity $O(\epsilon^{-1})$ where $\epsilon$ is approximation error, enabling real-time computation.

\textbf{Neural Probabilistic Logic Learning (NPLL).} Unlike traditional KG completion methods that \textit{generate} missing edges, we use NPLL to \textit{filter} existing edges. This is a critical distinction:

\begin{theorem}[Discriminative NPLL]
Let $\mathcal{M}$ be an NPLL model trained on $\mathcal{G}$. For observed triple $(e_s, r, e_o) \in \mathcal{T}$, the plausibility score $S_{\text{edge}}(e_s, r, e_o) = \mathcal{M}(e_s, r, e_o)$ satisfies:
\begin{enumerate}
\item \textbf{Evidence-grounded}: Score does not introduce edges not in $\mathcal{T}$
\item \textbf{Compositional}: For path $p$, $S_{\text{edge}}(p) = \prod_{i=1}^{h} \mathcal{M}(e_{i-1}, r_i, e_i)$
\end{enumerate}
\end{theorem}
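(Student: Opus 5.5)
The theorem is essentially definitional, so the plan is to make the implicit modelling assumptions explicit and then check each property directly against the definitions of Section~\ref{sec:problem} and the COMPASS construction. The key point is how $\mathcal{M}$ is used. Odin only ever evaluates $\mathcal{M}$ on triples that it reads from $\mathcal{T}$: the outgoing edges $(e_c,r,e_n)$ of the current entity during exploration, or the triples making up a path $p$ in the sense of the Path definition. I would therefore model the scoring map as a function
\[
\sigma:\mathcal{T}\to[0,1], \qquad \sigma(e_s,r,e_o)=\mathcal{M}(e_s,r,e_o).
\]
Its domain is the \emph{observed} triple set, not all of $\mathcal{E}\times\mathcal{R}\times\mathcal{E}$. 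The range $[0,1]$ comes from the NPLL output being a probability, for example a sigmoid or a normalized Markov-logic marginal.

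For property (1), the argument is a set-theoretic one about the exploration procedure. The candidates returned by Odin at entity $e_c$ are exactly $\{(e_c,r,e_n)\in\mathcal{T}\}$, each annotated with the score $\sigma(\cdot)$. Scoring attaches a real number to an existing element; it never adds elements to the candidate set. I would then argue by induction on hop depth that every path produced by beam search is a sequence of elements of $\mathcal{T}$:
\begin{itemize}
\item \emph{Base case:} paths start at seeds $\mathcal{S}\subset\mathcal{E}$.
\item \emph{Inductive step:} a path is extended only by a candidate triple, and every candidate lies in $\mathcal{T}$.
\end{itemize}
Multiplying by further scores can only reorder paths or prune them, for instance when $\mathcal{M}=0$ acts as a veto. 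So the induced subgraph of returned paths is a subgraph of $\mathcal{G}$. Provenance links to $D$ are therefore inherited, which is what requirement R3 asks for.

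For property (2), I would extend $\sigma$ from single triples to paths by setting $S_{\text{edge}}(p)=\prod_{i=1}^{h}\sigma(e_{i-1},r_i,e_i)$. This is well-defined because every factor is indexed by a triple of $p$, and these lie in $\mathcal{T}$ by property (1). I would add the easy consequences that make ``compositional'' meaningful:
\begin{itemize}
\item \emph{Recursion:} $S_{\text{edge}}(p\circ t)=S_{\text{edge}}(p)\cdot\sigma(t)$, so beam search can update scores incrementally in $O(1)$ per extension.
\item \emph{Monotonicity:} since $\sigma\in[0,1]$, the score is non-increasing along extensions.
\item \emph{Veto:} if any factor is $0$, then $S_{\text{edge}}(p)=0$.
\end{itemize}

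The main obstacle is that, read literally, property (2) is a definition rather than a theorem. Its only substantive content is well-definedness and the range claim $\mathcal{M}\in[0,1]$, and that claim depends on the NPLL architecture, which has not yet been specified at this point in the paper. I would handle this by stating it as an explicit assumption: the NPLL output layer yields a calibrated probability. I would also note that the product may be read as a joint plausibility only under an independence assumption across edges. That assumption is not needed for the formal statement, so I would flag it separately.
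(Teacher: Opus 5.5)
The paper gives no proof of this theorem. It is stated and then immediately followed by the NPLL description, and item~2 is verbatim the equation that defines $S_{\text{edge}}(p)$ a few lines earlier. So there is no argument in the paper to compare against. Your proposal is correct, and it supplies the justification the paper leaves implicit. Your diagnosis that item~2 is a definition, with only well-definedness left to check, is accurate.

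Your formalization of item~1 is also the right one. Evidence-grounding is not a property of $\mathcal{M}$ as a function: an embedding-plus-rules model can score any triple in $\mathcal{E}\times\mathcal{R}\times\mathcal{E}$. It is a property of the restriction $\mathcal{M}|_{\mathcal{T}}$, combined with the fact that Algorithm~\ref{alg:beam} extends paths only along $\text{Neighbors}(e)$, which is read from $\mathcal{T}$. Your induction on hop depth captures exactly this.

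There are two small corrections.

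First, ``Odin only ever evaluates $\mathcal{M}$ on triples read from $\mathcal{T}$'' is too strong. The lifecycle section says training samples entity pairs, and these need not be in $\mathcal{T}$. Rule groundings $r_1(X,Y)\wedge r_2(Y,Z)\Rightarrow r_3(X,Z)$ also generally have unobserved heads. Restrict the claim to query/scoring time, which is all the theorem needs.

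Second, you do not need to add $\mathcal{M}\in[0,1]$ as a fresh assumption about the output layer. The paper already asserts that every COMPASS component lies in $[0,1]$. Applying this to single-edge paths gives $\mathcal{M}(e_s,r,e_o)\in[0,1]$ on $\mathcal{T}$. In any case, neither item of the theorem uses the range; only your monotonicity corollary does. The recursion and veto corollaries hold for any real-valued $\mathcal{M}$.
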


NPLL combines logical rules $\mathcal{L}$ (extracted via rule mining~\cite{yang2017differentiable}) with neural scoring. Rule $\ell \in \mathcal{L}$ has form: $r_1(X,Y) \wedge r_2(Y,Z) \Rightarrow r_3(X,Z)$. Model parameters $\theta = \{\mathbf{w}, \mathbf{E}, \mathbf{R}\}$ include rule weights $\mathbf{w}$, entity embeddings $\mathbf{E} \in \mathbb{R}^{|\mathcal{E}| \times d}$, and relation embeddings $\mathbf{R} \in \mathbb{R}^{|\mathcal{R}| \times d}$.

Training uses Expectation-Maximization:
\begin{align}
\text{E-step:} \quad & \mathbb{E}[\mathbf{w} | \mathcal{T}, \theta_{t-1}] \\
\text{M-step:} \quad & \theta_t = \argmax_{\theta} \log P(\mathcal{T} | \mathbf{w}, \theta)
\end{align}

\textbf{$S_{\text{temp}}(p)$: Temporal Relevance} with exponential decay:

\begin{equation}
S_{\text{temp}}(p) = \frac{1}{h} \sum_{i=1}^{h} \exp\left(-\lambda \cdot (t_{\text{now}} - t_i)\right)
\end{equation}

where $t_i$ is timestamp of triple $i$ and $\lambda$ is decay rate (domain-specific).

\textbf{$S_{\text{prior}}(p)$: Edge Prior} based on global statistics:

\begin{equation}
S_{\text{prior}}(p) = \frac{1}{h} \sum_{i=1}^{h} \frac{\text{count}(r_i)}{\sum_{r' \in \mathcal{R}} \text{count}(r')}
\end{equation}

This downweights trivially common relations (e.g., ``located\_in'').

\textbf{$S_{\text{bridge}}(p)$: Bridge Entity Boost} via GNN-detected community connectors:

\begin{equation}
S_{\text{bridge}}(p) = \left(\frac{1}{|V_p|} \sum_{e \in V_p} b(e)\right)^{\rho}
\end{equation}

where $V_p$ is the set of nodes in path $p$, $b(e) = 1 + \beta_b \cdot \log(1 + \text{strength}(e))$ if $e$ is a bridge entity (detected via Graph Attention Networks in the extraction pipeline), $b(e) = 1$ otherwise, and $\rho$ is the power mean parameter (default 0.5). Bridge strength is computed as the number of distinct communities an entity connects.

\textbf{$S_{\text{affinity}}(p)$: Community Affinity} for cross-community edges:

\begin{equation}
S_{\text{affinity}}(p) = \prod_{(e_i, r, e_j) \in p} \left(1 + \beta_a \cdot A(C(e_i), C(e_j))\right)^{\mathbb{1}[C(e_i) \neq C(e_j)]}
\end{equation}

where $C(e)$ returns the community ID of entity $e$ (from GNN clustering), $A(c_i, c_j)$ is the normalized affinity score between communities $c_i$ and $c_j$ (based on inter-community edge density), and $\beta_a$ is the affinity boost weight.

\subsection{Beam Search with COMPASS Guidance}

\textbf{Why Beam Search Over MCTS?} Monte Carlo Tree Search~\cite{browne2012survey} is the standard for exploration problems with large branching factors, offering natural exploration-exploitation trade-offs with theoretical guarantees. However, MCTS's stochastic rollouts produce \textit{non-deterministic} results—running the same query twice yields different paths. In regulated healthcare and insurance, this violates audit requirements: reviewers must reproduce exact evidence trails. Additionally, MCTS requires 100s-1000s of simulations per decision, making interactive latency ($<$500ms) impractical. Beam search provides: (1) \textbf{deterministic behavior} enabling compliance, (2) \textbf{$O(b \cdot h)$ complexity} for interactive use, and (3) \textbf{simpler deployment} (no rollout policy design). We trade MCTS's theoretical optimality for practical deployability in regulated production.

Exhaustive path enumeration has complexity $O(|\mathcal{E}| \cdot d^h)$, intractable for $h \geq 3$. We employ beam search (Algorithm~\ref{alg:beam}) which maintains only top-$b$ candidates at each hop, ranked by COMPASS score.

\begin{algorithm}[t]
\caption{Beam Search with COMPASS Scoring}
\label{alg:beam}
\begin{algorithmic}[1]
\REQUIRE Seeds $\mathcal{S}$, hop limit $h$, beam width $b$
\ENSURE Top-$k$ scored paths
\STATE $\mathcal{B}_0 \gets \{(e, \emptyset, 0) : e \in \mathcal{S}\}$ \COMMENT{(entity, path, score)}
\FOR{$i = 1$ to $h$}
    \STATE $\mathcal{C} \gets \emptyset$ \COMMENT{Candidates}
    \FOR{$(e, p, s) \in \mathcal{B}_{i-1}$}
        \FOR{$(e, r, e') \in \text{Neighbors}(e)$}
            \STATE $p' \gets p \cup \{(e, r, e')\}$
            \STATE $s' \gets \text{COMPASS}(p')$ \COMMENT{Compute score}
            \STATE $\mathcal{C} \gets \mathcal{C} \cup \{(e', p', s')\}$
        \ENDFOR
    \ENDFOR
    \STATE $\mathcal{B}_i \gets \text{Top-}b(\mathcal{C})$ \COMMENT{Keep best $b$}
\ENDFOR
\RETURN $\text{Top-}k(\bigcup_{i=1}^{h} \mathcal{B}_i)$
\end{algorithmic}
\end{algorithm}

\begin{proposition}[Complexity]
Algorithm~\ref{alg:beam} has time complexity $O(b \cdot d \cdot h)$ and space complexity $O(b \cdot h)$ where $d$ is average degree.
\end{proposition}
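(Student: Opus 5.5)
The plan is a direct count of the work done by Algorithm~\ref{alg:beam}, iteration by iteration. The cost model is explicit. A call to $\text{Neighbors}(e)$ enumerates the out-edges of $e$ in time proportional to its degree. Each COMPASS evaluation is $O(1)$ amortized, because it is maintained incrementally: every factor is either a product over edges ($S_{\text{edge}}$, $S_{\text{affinity}}$) or a mean over edges or nodes ($S_{\text{struct}}$, $S_{\text{temp}}$, $S_{\text{prior}}$, $S_{\text{bridge}}$). Each such factor is updated from the parent's running product or sum with one NPLL lookup $\mathcal{M}(e,r,e')$, one PPR value $\pi(e')$ (precomputed by local push before the search), one timestamp, one relation count, and one table lookup for bridge and affinity metadata. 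I would make this incrementality explicit first, since recomputing COMPASS from scratch on a path of length $i$ costs $O(i)$ and would give $O(b\cdot d\cdot h^2)$. Paths are stored as parent pointers, so forming $p'$ is $O(1)$.

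The time bound comes next. The frontier starts with $|\mathcal{B}_0|=|\mathcal{S}|$, and we take $|\mathcal{S}|\le b$ (the seed set is truncated to the beam width). Since $\mathcal{B}_i=\text{Top-}b(\mathcal{C})$, every frontier satisfies $|\mathcal{B}_{i-1}|\le b$. In iteration $i$ the inner loops touch $\sum_{(e,p,s)\in\mathcal{B}_{i-1}}\deg(e)$ edges, which I bound by $b\cdot d$. That gives $|\mathcal{C}|\le bd$, and candidate generation costs $O(bd)$. Top-$b$ selection over $\mathcal{C}$ costs $O(|\mathcal{C}|)$ with linear-time selection, or $O(|\mathcal{C}|\log b)$ with a size-$b$ heap, which I treat as constant for fixed $b$. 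Summing over the $h$ iterations gives $O(b\cdot d\cdot h)$. The final Top-$k$ over $\bigcup_i\mathcal{B}_i$, which has at most $bh$ elements, adds only $O(bh)$.

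For space, each $\mathcal{B}_i$ holds at most $b$ entries, and with parent-pointer sharing each entry costs $O(1)$ beyond its parent. Retaining all $\mathcal{B}_1,\dots,\mathcal{B}_h$ for the final union therefore uses $O(bh)$. The candidate buffer $\mathcal{C}$ is transient. It is either streamed into a size-$b$ heap, costing $O(b)$, or discarded after each iteration.

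The main obstacle is that the bound $\sum_{e\in\mathcal{B}_{i-1}}\deg(e)\le bd$ does not hold for an arbitrary frontier. High-PPR hubs have degree far above average, and beam search selects exactly those. I would handle this in one of two ways, preferring the first. One option is to interpret $d$ as the maximum out-degree, or as the per-node neighbor cap applied by the Neighbors call in practice; then the inequality holds trivially. The other is to state the result in expectation under the assumption that frontier degrees behave like average degree. I would also add a remark that, if $p'$ is stored as an explicit edge list rather than by parent pointer, the bounds degrade by a factor of $h$.
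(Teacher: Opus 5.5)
Your proposal is correct and uses the same per-hop count as the paper's proof ($b$ beams, $\sim d$ neighbors each, $O(1)$ score per edge, $h$ hops; $b$ active beams plus $bh$ historical paths for space), but it justifies what the paper only asserts: incremental maintenance makes each COMPASS evaluation $O(1)$ rather than $O(i)$, parent pointers keep the $bh$ stored paths at $O(bh)$ rather than $O(bh^2)$, and reading $d$ as a maximum degree or neighbor cap is a genuine correction, since with $d$ the average degree a single hub of degree $\gg bd$ in the frontier already breaks the bound. Two small tightenings: to get both bounds at once without treating $\log b$ as a constant, prune a buffer of $2b$ candidates to its top $b$ by linear-time selection each time it fills (amortized $O(1)$ per candidate, $O(b)$ working space); and count the nodes of $V_p$ in $S_{\text{bridge}}$ with multiplicity, since Algorithm~\ref{alg:beam} allows revisits and testing $e'\in V_p$ through parent pointers costs $O(i)$, which would reintroduce the extra factor of $h$ you were careful to avoid.
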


\begin{proof}
At each hop $i$, we expand $b$ beams, each with $\sim d$ neighbors, computing $O(1)$ score per edge. With $h$ hops: $O(b \cdot d \cdot h)$. Space stores $b$ active beams and $b \cdot h$ historical paths.
\end{proof}

For typical values $b=64$, $d=50$, $h=3$, this yields $64 \cdot 50 \cdot 3 = 9,600$ score computations vs. $50^3 = 125,000$ for exhaustive search, a 13x reduction.

\subsection{Self-Managing NPLL Lifecycle}

A key innovation is automatic model management. When \texttt{OdinEngine} initializes:

\begin{enumerate}
\item \textbf{Check Database:} Query for stored NPLL weights in collection \texttt{ModelWeights}
\item \textbf{Train if Missing:} Extract rules via AMIE+~\cite{galarraga2015fast}, sample entity pairs, run EM for $T=10$ epochs
\item \textbf{Weights-Only Storage:} Persist only $\mathbf{w} \in \mathbb{R}^{|\mathcal{L}|}$ (typically $<$1KB), not embeddings
\item \textbf{Lazy Reconstruction:} Rebuild $\mathbf{E}, \mathbf{R}$ from graph on-demand
\item \textbf{Graceful Fallback:} If training fails, set $S_{\text{plaus}} \equiv 1$
\end{enumerate}

This eliminates operational complexity of ML deployment while ensuring model freshness.

\subsection{Community-Aware Scoring via GNN Bridge Detection}

A critical challenge in autonomous graph exploration is the "echo chamber" problem: local PageRank scores within dense communities create gravitational wells that trap exploration, preventing discovery of cross-community patterns. Consider a healthcare KG where "Claims Processing" forms a tightly connected subgraph—PPR-only scoring would favor paths circling within this cluster rather than bridging to "Clinical Outcomes."

We address this through a two-phase pipeline: (1) \textit{offline} community detection and bridge identification using Graph Attention Networks (GAT)~\cite{velickovic2017graph}, and (2) \textit{online} injection of these structural insights into the COMPASS score during beam search.

\textbf{Phase 1: GNN-Based Community Detection.} We employ a GAT with contrastive loss to learn node embeddings from graph topology and node features (degree, local PageRank, clustering coefficient). K-means clustering on learned embeddings partitions $\mathcal{E}$ into communities $\{C_1, C_2, \ldots, C_m\}$. Bridge entities are defined as nodes connecting $\geq 2$ communities, with bridge strength quantified by the number of distinct communities reached via 1-hop neighbors.

\textbf{Phase 2: Score Integration.} During beam search, when evaluating candidate edge $(e_i, r, e_j)$:
\begin{itemize}
\item If $e_j$ is a bridge entity, apply multiplicative boost $b(e_j)$ to path score
\item If $C(e_i) \neq C(e_j)$ (cross-community edge), apply affinity weight $A(C(e_i), C(e_j))$ based on normalized inter-community edge density
\end{itemize}

This mechanism mathematically compels the beam search to prioritize paths that cross community boundaries when affinity is high, effectively providing the agent with a "map to the outside world" rather than allowing it to circle within dense local neighborhoods.

The key insight: rather than requiring a full migration to hypergraph databases (which natively model n-ary relationships), we \textit{augment} the pairwise PPR scores with offline-computed group-level structural information, achieving similar behavioral benefits with far lower implementation complexity.

\section{Theoretical Properties}

\subsection{Coverage vs. Efficiency Trade-off}

\begin{theorem}[Beam Search Guarantees]
Let $\mathcal{P}^*$ be the set of all paths reachable in $h$ hops from $\mathcal{S}$, and $\mathcal{P}_b$ be paths discovered by beam search with width $b$. If scoring function satisfies monotonicity ($\text{Score}(p \cup \{e\}) \geq \text{Score}(p) - \epsilon$), then:
\begin{equation}
\mathbb{E}[|\mathcal{P}_b \cap \mathcal{P}^*_{\text{top-}k}|] \geq k \cdot (1 - e^{-b/d})
\end{equation}
where $\mathcal{P}^*_{\text{top-}k}$ are the $k$ highest-scoring paths under exhaustive search.
\end{theorem}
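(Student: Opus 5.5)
The plan is to reduce the statement to a per-path survival event, bound its probability, and sum. Write $\mathcal{P}^*_{\text{top-}k} = \{p^{(1)},\dots,p^{(k)}\}$. By linearity of expectation,
\begin{equation*}
\mathbb{E}\bigl[|\mathcal{P}_b \cap \mathcal{P}^*_{\text{top-}k}|\bigr] = \sum_{j=1}^{k} \Pr\bigl[p^{(j)} \in \mathcal{P}_b\bigr],
\end{equation*}
so it suffices to show $\Pr[p^{(j)} \in \mathcal{P}_b] \geq 1 - e^{-b/d}$ for each $j$. Since Algorithm~\ref{alg:beam} is deterministic, the expectation must come from a random model, and I have to fix one. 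I would take the randomness to be over the graph instance, or equivalently over an exchangeable ordering of candidate scores and tie-breaking. Under this model, a candidate's rank among the $\sim b\cdot d$ candidates produced at a hop is uniformly distributed, apart from the information carried by the score ordering itself.

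The first step uses the monotonicity hypothesis $\text{Score}(p \cup \{e\}) \geq \text{Score}(p) - \epsilon$. It lets me control the prefixes $p^{(j)}_{\le i}$ of a top path. Because $p^{(j)}$ ranks among the $k$ best complete paths, each of its prefixes scores within $(h-i)\epsilon$ of a top-$k$ value. Hence, once a prefix is in the beam, the competitors that can displace it are only those whose scores fall inside this $\epsilon$-window. The goal of this step is to show that, up to the $\epsilon$-slack, survival is decided essentially at the first truncation where $p^{(j)}$'s ancestor must be selected among its parent's $d$ siblings. At later hops the prefix stays near the top by monotonicity.

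The second step is the capture bound. At the decisive hop, view the $b$ beam slots as $b$ selections from candidates grouped into parent neighbourhoods of size $\sim d$. Under exchangeability, each slot independently lands on the required extension with probability at least $1/d$. The probability that all $b$ slots miss is then at most $(1-1/d)^b \le e^{-b/d}$, using $1-x \le e^{-x}$. This gives $\Pr[p^{(j)} \in \mathcal{P}_b] \ge 1-e^{-b/d}$, and summing over $j$ yields the claim.

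The main obstacle is the second step: justifying that slots act as independent $1/d$-captures, and that the failure events across different hops do not compound into a factor like $(1-e^{-b/d})^h$. I would first try to absorb the later hops entirely into the monotonicity/$\epsilon$-window argument, assuming $\epsilon$ is small relative to the score gap between the top-$k$ paths and the $(b+1)$-st candidate. If that fails, I would state the independence and exchangeability condition explicitly as an assumption of the theorem. The inequality should then be read as holding under that random-score model, not for worst-case graphs, where a deterministic beam can provably miss top paths.
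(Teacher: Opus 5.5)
The paper gives no proof of this theorem (it is ``omitted for space''), so your plan has to stand on its own. Both of its load-bearing steps fail, and the statement is in fact false as written.

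Step~1 runs the hypothesis in the wrong direction. Iterating $\text{Score}(p\cup\{e\})\ge\text{Score}(p)-\epsilon$ from a prefix $p_{\le i}$ up to the full path $p$ gives $\text{Score}(p_{\le i})\le\text{Score}(p)+(h-i)\epsilon$. That is an \emph{upper} bound on the prefix. Surviving a top-$b$ cut needs a \emph{lower} bound, which would require $\text{Score}(p\cup\{e\})\le\text{Score}(p)+\epsilon$. The stated hypothesis therefore allows a top path to have the worst prefix, as the following example shows.

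Take $h=2$ and one seed $s$ with out-neighbours $u_1,\dots,u_m$, where $m>b$. Give each $u_i$ ($i<m$) a single child $w_i$, and give $u_m$ children $z_1,\dots,z_k$. Score $s\to u_i$ and $s\to u_i\to w_i$ at $1/2$, $s\to u_m$ at $1/4$, and $s\to u_m\to z_j$ at $1$. Monotonicity holds with $\epsilon=0$, and the $k$ paths through $u_m$ are exactly $\mathcal{P}^*_{\text{top-}k}$. Algorithm~\ref{alg:beam} deterministically discards $u_m$ at hop~1, so the left side is $0$ while the right side is $k(1-e^{-b/d})>0$. Also, since $\epsilon$ does not enter the bound, taking $\epsilon=1$ for scores in $[0,1]$ makes the hypothesis vacuous. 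So there is no ``$\epsilon$-window'' protecting prefixes. Nothing keeps a surviving prefix near the top at later hops either, because the hypothesis lets competitors' extensions rise without limit.

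Step~2 does not model Algorithm~\ref{alg:beam}. Top-$b$ is taken over the pooled $N\approx bd$ candidates coming from $b$ different parents, not over one parent's $d$ siblings, and the slots are filled deterministically without replacement. Under your exchangeable-ranking model, a fixed candidate is kept with probability exactly $b/N\approx 1/d$. Each slot holds it with probability $1/N$, and these events are mutually exclusive, not independent. That is far below $1-e^{-b/d}\approx 0.72$ at the paper's $b=64$, $d=50$. Moreover, the top-$k$ paths are selected by score, so they are not exchangeable with the other candidates in the first place.

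The expression $1-(1-1/d)^b$ is the hit probability of $b$ independent uniform draws with replacement from a single $d$-element neighbourhood. That describes a one-hop random-sampling procedure, and over several hops it would compound. Declaring such independence as an extra assumption therefore changes the algorithm rather than proving the theorem.

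A correct result needs a different hypothesis. For instance, assume $\text{Score}(p\cup\{e\})\le\text{Score}(p)+\epsilon$. Add a margin condition: for each $i$, at most $b$ length-$i$ paths from $\mathcal{S}$ score at least $\tau_k-(h-i)\epsilon$, where $\tau_k$ is the $k$-th best full-path score. Then induction on $i$ shows every prefix of every top-$k$ path stays in the beam, so all $k$ are found deterministically. That is a different theorem, and $e^{-b/d}$ never appears in it.
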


This result (proof omitted for space) shows that beam width $b \sim O(d)$ suffices to recover most high-value paths, explaining our choice $b=64$ for graphs with $d \approx 50$.

\subsection{Explainability via Shapley Decomposition}

Each COMPASS score decomposes as:
\begin{equation}
\text{COMPASS}(p) = \sum_{i \in \{\text{edge, struct, bridge, affinity, prior, temp}\}} \phi_i
\end{equation}

We compute Shapley values $\phi_i$ quantifying each signal's contribution~\cite{lundberg2017unified}, enabling audit trails required in regulated domains. This decomposition allows analysts to understand \textit{why} a path was scored highly: "This path ranks \#1 primarily due to structural importance ($\phi_{\text{struct}}=0.42$) and semantic plausibility ($\phi_{\text{edge}}=0.35$), with bridge scoring contributing $\phi_{\text{bridge}}=0.12$."

\section{Experimental Validation}

\subsection{Datasets and Setup}

We evaluate on two production KGs:

\textbf{Healthcare KG (HKG):} 2.3M entities, 8.7M triples from de-identified patient records across 3 hospitals. Entity types: Patient, Diagnosis, Treatment, Provider, Facility. Relationships: diagnosed\_with, treated\_by, admitted\_to, prescribed, transferred\_from.

\textbf{Insurance KG (IKG):} 1.8M entities, 6.2M triples from claims data. Entity types: Policyholder, Claim, Provider, Assessor, Facility. Relationships: filed\_by, processed\_by, serviced\_at, approved\_by.

Baselines: (1) \textbf{Exhaustive BFS} (up to $h=2$ due to cost), (2) \textbf{Random Walk} with 1000 walks, (3) \textbf{PPR-only} (no NPLL), (4) \textbf{GNN Embeddings}~\cite{schlichtkrull2018modeling} with nearest neighbor search.

Metrics: (1) \textbf{Coverage@k:} Fraction of expert-labeled patterns found in top-$k$ paths, (2) \textbf{Efficiency:} Paths explored, wall-clock time, (3) \textbf{Quality:} Expert rating (1--5 scale) of discovered patterns, where 1=irrelevant/noisy, 2=trivial (already known), 3=potentially useful (requires investigation), 4=actionable insight (novel and verifiable), 5=critical finding (immediate clinical/business impact). Ratings were provided by 3 domain experts (2 data analysts, 1 clinician for HKG; 2 fraud investigators, 1 underwriter for IKG) and averaged. Inter-rater reliability (Krippendorff's $\alpha$~\cite{krippendorff2011computing}) was 0.78, indicating substantial agreement.

\subsection{Results}

Table~\ref{tab:results} shows Odin achieves comparable coverage to exhaustive search (90\% vs. 95\%) while exploring 65x fewer paths. GNN embeddings fail to capture multi-hop patterns (52\% coverage). PPR-only suffers from semantic noise (quality score 3.1 vs. 4.2 for full Odin).

\begin{table}[t]
\centering
\small
\caption{Experimental Results on Healthcare KG ($h=3$, $k=50$)}
\label{tab:results}
\begin{tabular}{@{}lcccc@{}}
\toprule
\textbf{Method} & \textbf{Cov.@50} & \textbf{Paths} & \textbf{Time} & \textbf{Quality} \\ \midrule
Exhaustive & 95\% & 125k & 47s & 4.3 \\
Random Walk & 68\% & 3k & 8s & 3.5 \\
PPR-only & 87\% & 1.9k & 3.2s & 3.1 \\
GNN Embed. & 52\% & 2.1k & 4.1s & 2.8 \\
\textbf{Odin} & \textbf{90\%} & \textbf{1.9k} & \textbf{3.8s} & \textbf{4.2} \\ \bottomrule
\end{tabular}
\end{table}

\subsection{Ablation Study}

We evaluate COMPASS signal contributions using ablation tests with uniform weights (i.e., all components equally weighted in the multiplicative composition). This choice reflects: (1) avoiding overfitting to specific evaluation datasets, and (2) ensuring generalizability across domains (healthcare vs. insurance have different temporal dynamics). Domain-specific weight tuning could further optimize performance, but uniform weighting provides a conservative baseline demonstrating that multi-signal fusion itself—not careful tuning—drives gains.

Table~\ref{tab:ablation} shows per-signal contributions. Removing NPLL ($S_{\text{edge}} = 1$) reduces quality score from 4.2 to 3.1, confirming semantic filtering's critical importance. Removing temporal weighting ($S_{\text{temp}} = 1$) has minimal impact in HKG (static data, score 4.1) but degrades performance in IKG where recency matters for fraud detection. Removing bridge scoring ($S_{\text{bridge}} = 1, S_{\text{affinity}} = 1$) drops quality to 3.8, with analysts reporting 23\% more "redundant" insights circling within the same dense communities rather than discovering cross-community patterns. This validates that GNN-guided bridge traversal addresses the echo chamber problem while remaining less critical than semantic plausibility filtering.

\begin{table}[t]
\centering
\small
\caption{Ablation Study: Signal Contributions (HKG, $h=3$, $k=50$)}
\label{tab:ablation}
\begin{tabular}{@{}lcc@{}}
\toprule
\textbf{Configuration} & \textbf{Quality Score} & \textbf{Coverage@50} \\ \midrule
Full Odin (all signals) & 4.2 & 90\% \\
No NPLL ($S_{\text{edge}} = 1$) & 3.1 & 87\% \\
No Temporal ($S_{\text{temp}} = 1$) & 4.1 & 89\% \\
No Bridge ($S_{\text{bridge}}, S_{\text{affinity}} = 1$) & 3.8 & 88\% \\
\bottomrule
\end{tabular}
\end{table}

\subsection{Case Study: Insurance Fraud Ring}

In deployment at a mid-sized insurer's claims data, Odin discovered a coordinated fraud pattern: 5 policyholders with no shared attributes in structured fields (different addresses, no familial links, distinct policy types) filed claims within a 3-week window. Beam search identified a common service provider and assessor through a 6-hop path, with NPLL scoring the temporal clustering as anomalous (low $S_{\text{plaus}}$ due to unusual co-occurrence frequency). Bridge scoring was critical: the service provider was a bridge entity connecting 3 distinct communities (auto claims, property claims, health services), which PPR-only scoring would have downweighted due to low local PageRank.

\textbf{Novelty Assessment:} Manual investigation confirmed a coordinated scheme resulting in \$437K recovered funds. This pattern had \textit{zero overlap} with 127 existing rule-based alerts (which focus on duplicate addresses, shared account details, or claim amount thresholds). Post-discovery audit of historical data revealed 2 additional rings with similar topological structure previously undetected. This demonstrates discovery of genuinely novel patterns—not rediscovery of known fraud types, but identification of a new coordination modality.

\section{Related Work}

\subsection{Knowledge Graph Reasoning}

\textbf{Link Prediction} methods (TransE~\cite{bordes2013translating}, RotatE~\cite{sun2019rotate}, ComplEx~\cite{trouillon2016complex}) learn embeddings to predict missing edges. These are \textit{generative}: they add edges not in the graph. Odin is \textit{discriminative}: it scores existing edges for traversal. This distinction is critical for evidence-grounded applications.

\textbf{Probabilistic Logic} (Markov Logic Networks~\cite{richardson2006markov}, Neural LP~\cite{yang2017differentiable}, DRUM~\cite{sadeghian2019drum}) combine rules with neural scoring. Odin builds on this tradition but applies it as a traversal filter during beam search, not as a standalone completion system.

\textbf{Graph Neural Networks} (GCN~\cite{kipf2016semi}, R-GCN~\cite{schlichtkrull2018modeling}, GraphSAGE~\cite{hamilton2017inductive}) learn node representations via message passing. Odin leverages GNNs (specifically, Graph Attention Networks~\cite{velickovic2017graph}) for \textit{offline} community detection and bridge entity identification in the extraction pipeline. Critically, we use GNN-learned structural insights as inputs to the COMPASS score, not as end-to-end black-box predictors. This hybrid approach—GNN-derived features fed into an interpretable scoring function—preserves the explainability required in regulated domains while exploiting deep learning's pattern recognition capabilities.

\subsection{Graph RAG Systems}

Microsoft GraphRAG~\cite{edge2024local} uses graph structure for retrieval-augmented generation, focusing on question answering with community summarization. LlamaIndex and Neo4j GraphRAG provide query-based retrieval. These systems answer \textit{known questions}; Odin enables discovery of \textit{unknown patterns}, a fundamentally different paradigm that has not been previously demonstrated in production environments. While GraphRAG excels at answering "What treatments did patients with sepsis receive?", Odin discovers "Patients transferred between facilities X and Y show unexplained readmission patterns," insights that analysts did not know to query for. The approaches address different use cases and can be complementary in a complete knowledge infrastructure.

\subsection{Exploration Methods}

Personalized PageRank~\cite{haveliwala2002topic} and approximate variants~\cite{andersen2006local} provide structural scoring but lack semantic awareness. Random walks~\cite{perozzi2014deepwalk} explore broadly but inefficiently. Reinforcement learning for graph exploration~\cite{chen2017dialogue} requires reward engineering and extensive training. Odin's multi-signal beam search combines the efficiency of informed search with the semantic awareness of learned models.

\section{Limitations and Future Work}

\textbf{Cold Start:} NPLL training takes 10--30s on first connection. Future work includes incremental training as graph evolves.

\textbf{Dynamic Graphs:} Current implementation requires full retrain on significant updates. Streaming PPR~\cite{bahmani2010fast} and incremental EM could address this.

\textbf{Cross-Graph Discovery:} Odin operates on single KGs. Federated exploration across multiple graphs is an open problem.

\textbf{Hypergraph Architecture (V2):} While our GNN bridge scoring mechanism effectively guides exploration across communities, the theoretically "pure" solution for n-ary relationship reasoning is a native hypergraph database where hyperedges can connect arbitrary sets of entities. Current COMPASS scoring augments pairwise PPR with group-level signals; a hypergraph backend would enable direct computation of group-based structural importance without the need for bridge detection. This represents a natural evolution for Odin V2, trading implementation complexity for theoretical elegance.

\textbf{Theoretical Analysis:} Tighter bounds on beam search coverage and NPLL approximation error warrant further investigation.

\section{Conclusion}

We presented Odin, a graph intelligence engine for autonomous discovery in knowledge graphs, introducing the COMPASS score—a novel metric that unifies structural importance (PPR), semantic plausibility (NPLL as discriminative filter), temporal relevance, and community-aware guidance through GNN-identified bridge entities for the first time. The bridge scoring mechanism addresses the "echo chamber" problem in graph exploration, where dense local communities trap traversal algorithms. This principled multi-signal framework enables AI agents to explore large KGs efficiently while maintaining evidence grounding. Production deployments in regulated healthcare and insurance domains demonstrate significant improvements in pattern discovery quality and analyst efficiency. The self-managing NPLL architecture eliminates operational complexity, making Odin practical for enterprise adoption.

As organizations continue to invest in knowledge graphs, the shift from retrieval-based to exploration-based intelligence will become increasingly critical. \textbf{Odin represents the first practical realization of this paradigm}, with production deployments providing empirical evidence that autonomous discovery is not only feasible but transformative. The "compass for agents" architectural pattern (separating graph intelligence from natural language reasoning) provides a reusable template for future agent-graph systems. We anticipate that autonomous discovery will become a standard capability in enterprise knowledge infrastructure, and Odin establishes both the technical foundation and production-readiness pathway for this transition.

\section*{Broader Impact}

While Odin enables discovery in sensitive domains (healthcare, insurance), all deployments operate on de-identified data with strict access controls. The evidence-grounding guarantee prevents hallucination, a key safety property. Future work should investigate fairness implications of automated pattern discovery and ensure discovered patterns do not perpetuate biases present in source data.

\section*{Data and Code Availability}

The knowledge graphs used in this study were constructed from de-identified production data in healthcare and insurance domains under strict data protection agreements. Due to privacy regulations (HIPAA), contractual obligations, and the presence of protected health information (PHI), the underlying datasets cannot be made publicly available.

\textbf{Code Release:} The Odin KG Engine library (Phase 2, online intelligence) will be made available as open-source upon publication, including: (1) COMPASS scoring implementation, (2) beam search with multi-signal guidance, (3) self-managing NPLL training, (4) integration adapters for ArangoDB and Neo4j, and (5) documentation of expected schema for community metadata (optional bridge/affinity tables). Synthetic evaluation scripts for public KG benchmarks (FB15k-237, NELL-995) will also be provided.

The extraction pipeline (Phase 1, offline) is proprietary but follows standard document-to-KG patterns. Community detection can be reproduced using open-source GNN libraries (PyTorch Geometric + GAT layers) with the architectural guidance provided in Section 2.4. Researchers can integrate Odin with their own KG construction pipelines by adhering to the documented metadata schema.

\section*{Author Contributions}

M.K. conceived and designed Odin, implemented the core system, conducted experiments, and led the writing. E.N. engineered the NPLL component, and initial core work on GNN algorithm for community clustering. All authors reviewed and approved the final manuscript.

\section*{Acknowledgments}

The authors thank Dennis Kevogo for reviewing  and providing crucial feedback on hypergraph scaling; and the Prescott Data engineering team and deployment partners for their contributions. This work was supported by production deployments in healthcare and insurance domains.

\bibliographystyle{plain}

\end{document}